\documentclass[11pt]{article}

\usepackage[final]{acl}

\usepackage{times}
\usepackage{latexsym}

\usepackage[T1]{fontenc}

\usepackage[utf8]{inputenc}

\usepackage{microtype}

\usepackage{inconsolata}

\usepackage{graphicx}

\usepackage[utf8]{inputenc}
\usepackage[T1]{fontenc}
\usepackage{url}
\usepackage{booktabs}
\usepackage{amsfonts}
\usepackage{nicefrac}
\usepackage{microtype}
\usepackage[table]{xcolor}   
\usepackage{enumitem}        
\usepackage{multirow}
\usepackage{graphicx}
\usepackage{bm}
\usepackage{amsmath}
\usepackage{mathtools}
\usepackage{amssymb}
\usepackage{wrapfig}
\usepackage{amsthm}
\usepackage{subcaption}
\usepackage{bbm}

\PassOptionsToPackage{hidelinks}{hyperref}

\usepackage{hyperref}         

\definecolor{mygray}{gray}{.92}
\newtheorem{proposition}{Proposition}
\def\eg{\emph{e.g.}}

\newcommand{\ours}{\text{MPD}}

\newcommand{\X}{\mathbf{X}}
\newcommand{\U}{\mathbf{U}}
\newcommand{\V}{\mathbf{V}}
\newcommand{\Q}{\mathbf{Q}}
\newcommand{\w}{\mathbf{w}}

\newcommand{\bP}{\mathbf{P}}

\newcommand{\tableCellHeight}{1}
\newcommand{\tabstyle}[1]{
  \setlength{\tabcolsep}{#1}
  \renewcommand{\arraystretch}{\tableCellHeight}
  \centering
  \small
}

\usepackage{algorithm}
\usepackage{algpseudocode}

\algrenewcommand\algorithmicrequire{\textbf{Input:}}
\algrenewcommand\algorithmicensure{\textbf{Output:}}

\newcommand{\IState}{\State\hspace*{-\algorithmicindent}}

\title{Mitigating Hallucinations in Large Vision-Language Models without Performance Degradation}

\author{
  \textbf{Xingyu Zhu\textsuperscript{1,2}},
  \textbf{Junfeng Fang\textsuperscript{2}\thanks{Corresponding authors}},
  \textbf{Shuo Wang\textsuperscript{1}}, \\
  \textbf{Beier Zhu\textsuperscript{1}},
  \textbf{Zhicai Wang\textsuperscript{1}},
  \textbf{Yonghui Yang\textsuperscript{2}},
  \textbf{Xiangnan He\textsuperscript{1}\footnotemark[1]}
\\
\\
  \textsuperscript{1}
  MoE Key Lab of BIPC, University of Science and Technology of China,\\
  \textsuperscript{2}
  National University of Singapore
}

\begin{document}
\maketitle

\begin{abstract}
Large Vision-Language Models (LVLMs) exhibit powerful generative capabilities but frequently produce hallucinations that compromise output reliability. Fine-tuning on annotated data devoid of hallucinations offers the most direct solution, while its high computational cost motivates recent representation-based methods, which focus on mitigating hallucinatory components within hidden representations. Though efficient, we empirically observe that these methods degrade general generation capacity due to incomplete extraction of hallucination components and non-selective parameter updates. To address these limitations, we propose MPD, a dual-stage framework for mitigating hallucinations without performance degradation. Specifically, our MPD relies on two essential factors: (1) semantic-aware component disentanglement to extract pure hallucination components, and (2) interpretable parameter updates that selectively modify parameters most relevant to hallucination. Extensive experiments demonstrate that MPD achieves state-of-the-art performance, reducing hallucinations by 23.4\% while maintaining 97.4\% of general generative capability as evaluated on LLaVA-Bench and MME, with no additional computational cost.
\end{abstract}

\section{Introduction}
Recent advances in large vision-language
models (LVLMs)~\cite{zhu2023minigpt, liu2023visual, ye2023mplug, dai2024instructblip, bai2023qwen, zhu2026principled} have demonstrated remarkable capabilities in cross-modal understanding and generation. However, these models exhibit a persistent limitation known as hallucination phenomena~\cite{gunjal2024detecting,liu2023mitigating,zhu2026look}, \textit{i.e.}, a critical divergence where generated textual descriptions systematically misrepresent visual content. Typical manifestations include fabricating non-existent objects, misattributing visual properties, or hallucinating erroneous spatial relationships within images. These systematic errors not only undermine practical applications requiring precise vision-language alignment, but also pose significant risks for misinformation propagation and safety-critical deployment scenarios~\cite{liu2024survey, chen2024detecting}.

\begin{figure*}[htbp]
        \centering
        \includegraphics[width=1\linewidth]{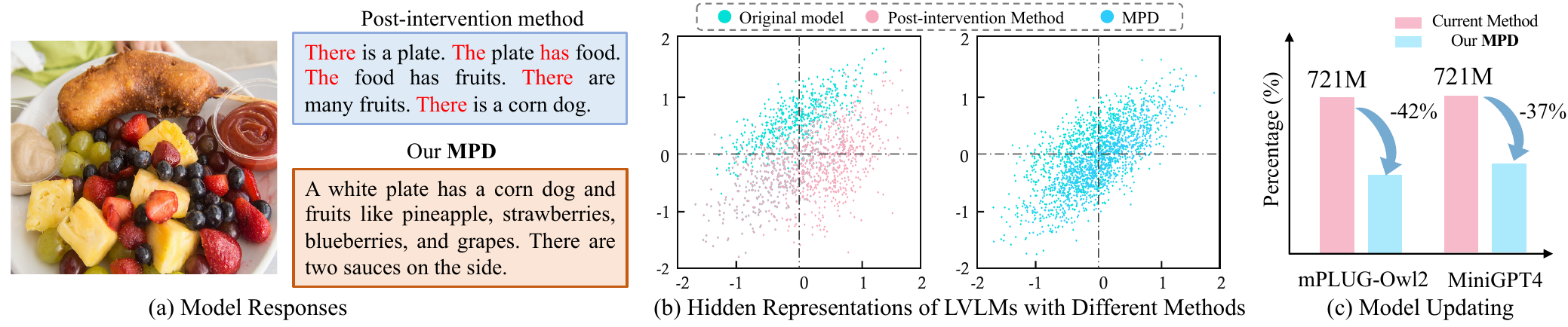}
        \caption{
        \textbf{Comparison between conventional representation-intervention methods and MPD.} 
        \textbf{(a)} Model responses to the same input image show that traditional methods produce repetitive and incoherent descriptions, while MPD generates coherent and informative outputs.
        \textbf{(b)} Post-intervention hidden representations from prior methods\cite{yang2025nullu,liu2024reducing} drift away from the original distribution, 
        \textbf{(c)} MPD aligns the hidden representations more closely with the original distribution. 
        \textbf{(d)} MPD also requires significantly fewer parameter updates and FLOPs, enabling more efficient hallucination mitigation.
        }\label{fig:motivation}
\vspace{-10pt}
\end{figure*}

Conventional approaches~\cite{chen2024halc, leng2024mitigating, chen2024alleviating} to hallucination mitigation primarily rely on labor-intensive dataset curation through manual annotation to filter hallucinatory content for fine-tuning, inherently suffering from time-consuming bottlenecks. Hence, recent research has shifted toward representation intervention~\cite{yang2025nullu, uppaal2024detox, liu2024reducing}, introducing a dual-stage paradigm. The first step focuses on extracting hallucinatory components from hidden representations by contrasting the model’s internal activations when it produces hallucinated versus faithful outputs. Building on this, the subsequent step updates model parameters to minimize the influence of the extracted hallucinatory components. This paradigm eliminates the dependency on exhaustive dataset reconstruction while enabling real-time hallucination mitigation.

While effective, current representation intervention methods~\cite{yang2025nullu, li2024inference, turner2023steering} suffer from a critical limitation: post-intervened LVLMs would lose general generative capabilities. As quantified in Figure~\ref{fig:motivation} (a), post-intervened LVLMs exhibit more semantic incoherence and lexical repetition rates compared to original models. 
We attribute this limitation to two flaws in current paradigms: First, during the extraction of hallucinatory components, these components are typically highly coupled with general semantic components. Existing methods overlook such shared components, leading to the unintended removal of general semantics. As shown in the left part of Figure~\ref{fig:motivation} (b), the distribution of representations in post-intervened LVLMs diverges significantly from that of the original model.
Second, during parameter updating, substantial perturbations are often applied to all weights within the targeted layers, resulting in the modification of hundreds of millions of parameters, as shown in Figure~\ref{fig:motivation} (c). Such large-scale updates inevitably induce overfitting and severely disrupt the original parameter distributions.

To address these issues, we propose \textbf{MPD}, a \emph{dual-stage framework} that mitigates hallucinations without degrading generative performance. In the first stage, we construct contrastive query pairs using an auxiliary large language model (LLM), where each pair comprises a hallucination-inducing question and a semantically equivalent non-hallucinatory question. By analyzing the model’s representations for both cases, we disentangle hallucination components through orthogonal projection. This design effectively prevents contamination between hallucination and general semantics, as evidenced by the well-aligned post-intervention distributions in Figure~\ref{fig:motivation}(b); theoretical analysis is provided in Section~\ref{sec:analy}. In the second stage, we identify parameters most correlated with hallucination components via cosine similarity and selectively update only these critical parameters. As shown in Figure~\ref{fig:motivation}(c), this selective update substantially reduces parameter perturbations by 42\% on mPLUG-Owl2~\cite{ye2023mplug} and 37\% on MiniGPT4~\cite{zhu2023minigpt}, thus preserving the model’s general capability. We term this approach \textbf{MPD}, highlighting its ability to \underline{m}itigate hallucinations without \underline{p}erformance \underline{d}egradation.

To validate MPD's efficacy, we conduct extensive experiments on five datasets (\textit{i.e.}, CHAIR~\citep{rohrbach2018object}, POPE~\citep{li2023evaluating}, MME~\citep{liu2023improved}, LLaVA-Bench~\citep{liu2023improved}) across three prevailing base LVLMs. Quantitative results demonstrate that MPD achieves advancing hallucination mitigation while preserving 97.4\% of LLMs' generative capabilities. Notably, MPD introduces no extra computational cost during inference compared to the original model.
This breakthrough bridges the efficiency-effectiveness gap in hallucination mitigation, offering practical scalability for real-world LVLM deployment. 

\section{Related Works}
\subsection{Hallucination in LVLMs}
LVLMs integrate powerful LLMs with visual backbones via modality alignment modules~\cite{liu2023visual, zhu2023minigpt, chen2023minigpt, zhu2026guardalign, zhuenhancing}, have demonstrated remarkable performance across various multimodal tasks~\cite{he2024ma, bai2023qwen}. However, these models are prone to object hallucination~\cite{bai2024hallucination}, where the generated response includes non-existent objects or incorrect attributes inconsistent with the visual input~\cite{rohrbach2018object,li2023evaluating}. To mitigate OH, several approaches rely on fine-tuning with additional supervision, including reinforcement learning from human feedback~\cite{sun2023aligning,gunjal2024detecting}, auxiliary annotation~\cite{jiang2024hallucination}, or the inclusion of noisy/negative samples~\cite{liu2023mitigating}. Although effective, such methods require high computational costs and hinder scalability.  To reduce the overhead of fine-tuning, recent studies have explored training-free alternatives. These include decoding-time constraints~\cite{leng2024mitigating,zhang2025self} or post-hoc revision modules~\cite{yin2023woodpecker,chen2024halc} that detect and correct hallucinated tokens using external visual grounding. However, these methods either introduce extra inference latency or depend on auxiliary models. In contrast, our method adopts a training-free, editing-based approach: it identifies hallucination-inducing directions in hidden representations and directly edits the model's internal parameters to suppress them. This enables efficient and consistent hallucination reduction.

\subsection{Subspace Projection and Model Editing}

Subspace projection has emerged as a powerful parameter-editing strategy for steering model behavior, enabling the suppression of undesired semantics (\eg, toxicity, bias, hallucination) by identifying and removing specific directions in the representation or weight space~\cite{turner2023steering, fang2024alphaedit}. In the multimodal context, Nullu~\cite{yang2025nullu} applies this idea to LVLMs by projecting weights onto the null space of hallucination-related directions estimated from hidden features, achieving hallucination suppression via global weight editing. More generally, parameter-modifying model editing techniques adjust internal weights either through meta-learning (\eg, MEND~\cite{MEND}, InstructEdit~\cite{zhang2024instructedit}) or locate-and-edit strategies (\eg, ROME~\cite{meng2022locating}, AnyEdit~\cite{jiang2025anyedit}), typically relying on key-value memory, gradient updates, or causal tracing to identify editable components. Recent extensions to multimodal models include works such as OPERA~\cite{huang2024opera} and VCD~\cite{leng2024mitigating}, which incorporate vision-aware attention refinement or mask-guided cross-modal grounding, often acting at the decoding or cross-attention layers. In contrast, our method introduces a fine-grained orthogonal subspace projection mechanism guided by contrastive pairs between faithful and hallucinated hidden representations. Instead of modifying all parameters, we edit only those weights most aligned with hallucination-specific directions, achieving effective mitigation with minimal disruption to the model's generation behavior.
\section{Method}\label{sec:method}
We first outline hallucination behavior in LVLMs (Section~\ref{sec:preliminary}), then extract hallucinated components via hidden state differencing (Section~\ref{sec:hallu_extract}). Next, we introduce selective parameter editing to suppress hallucination by projecting key weights away from the hallucination subspace (Section~\ref{sec:para_update}), and provide theoretical justification via orthogonal projection onto the faithful subspace (Section~\ref{sec:analy}).

\subsection{Preliminary}\label{sec:preliminary}

A typical LVLM~\cite{zhu2023minigpt,chen2023minigpt,liu2023visual,ye2023mplug} consists of a vision encoder, a text encoder, and a large language model as the decoder. Given an image and a textual input (\eg, a question), the vision encoder extracts visual features, which are projected into the same embedding space as text and concatenated for auto-regressive decoding. However, LVLMs often hallucinate~\cite{yin2023woodpecker,leng2024mitigating}, mentioning objects not present in the image. 
To address this, we build contrastive vision-language pairs sharing the same image but differing in text: one prompt ($x_i^{-}$) includes hallucinated content, while the other ($x_i^{+}$) provides a faithful visual description grounded in the image. The full dataset is denoted as $\mathcal{D} = \{(x_i^{+}, x_i^{-})\}_{i=1}^N$, and the details are provided in Appendix~\ref{sec:implementation}.

\subsection{Hallucination Component Extraction}\label{sec:hallu_extract}
Extracting hallucinatory components is the first step of our method. Since hallucinations arise from subtle deviations in hidden representations between faithful and hallucinated responses, we analyze token-level features across transformer layers \( \ell \in \{L_0, \dots, L\} \) in the base LLM of the LVLM. For each image-caption pair, we obtain token embeddings from both faithful and hallucinated descriptions, denoted as \( \mathbf{x}_{i,\ell}^+ \) and \( \mathbf{x}_{i,\ell}^- \), respectively. To obtain a compact representation, we average token embeddings across the sequence. The resulting vectors are stacked into matrices \( \mathbf{X}_\ell^+, \mathbf{X}_\ell^- \in \mathbb{R}^{N \times D} \), where \( D \) is the feature dimension. We then model the hallucinated representation as a composition of grounded semantics, hallucinatory component, and residual noise:
\begin{equation}
    \X_{\ell}^{-} = \X_{\ell}^{\text{real}} + \X_{\ell}^{\text{hall}} + \boldsymbol{\epsilon}^{-},
\end{equation}
where \( \X_{\ell}^{\text{real}} \) aligns with the image, \( \X_{\ell}^{\text{hall}} \) captures hallucinated semantics, and \( \boldsymbol{\epsilon}^{-} \) is residual noise. This decomposition forms the basis for hallucinatory component extraction.

Since $\X_\ell^{+}$ encodes faithful descriptions, its row space approximates the grounded semantic subspace. We perform singular value decomposition (SVD) on $\X_\ell^{+}$:
\begin{equation}
\X_\ell^{+} = \U_\ell \boldsymbol{\Sigma}_\ell \V_\ell^\top,
\end{equation}
where $\U_\ell \in \mathbb{R}^{N \times C}$ contains the top-$C$ left singular vectors. The projection matrix onto the grounded subspace is given by $\bP_\ell = \U_\ell \U_\ell^\top \in \mathbb{R}^{N \times N}$. We apply $\bP_\ell$ to $\X_\ell^{-}$ to obtain the grounded component $\hat{\X}_\ell^{-} = \bP_\ell \X_\ell^{-}$ and the hallucinatory component is extract as:
\begin{equation}\label{eq:orthogonal_comp}
\tilde{\X}_{\ell} = \X_{\ell}^{-} - \hat{\X}_{\ell}^{-} = (\mathbf{I} - \mathbf{P}_{\ell})\mathbf{X}_{\ell}^{-},
\end{equation}
where $\mathbf{I} \in \mathbb{R}^{D \times D}$ is the identity matrix. The hallucinatory component $\tilde{\X}_{\ell}$ lies orthogonal to the faithful subspace. The justification analysis are provided in Section~\ref{sec:analy}.

\begin{algorithm}[htp]
  \caption{Pipeline of \textbf{\ours}}
  \label{alg}
  \begin{algorithmic}[1]
    \Require Paired data $\mathcal{D}$, target layers $\{\ell\}$ in LVLM.
    \For {$\ell$ in $\{\ell\}$}
      \IState Calculating hidden states of hallucinatory component $\X^{-}$ and faithful components $\X^{+}$.
      \IState Extracting the orthogonal components of the faithful subspace $\Tilde{\X}_{\ell}$ via Eq.~\eqref{eq:orthogonal_comp}.
      \IState Selecting weight vectors $\{\w^{(i)}_{\ell}\}$ with top-$K$ similarity with $\Tilde{\X}_{\ell}$ via Eq.~\eqref{eq:weight_simi}.
      \IState Calculating the projection matrix $\tilde{\Q}_{\ell}$  via {Eq.~\eqref{eq:proj_Q}}.
      \IState Updating the model parameters via $\{\w^{(i)}_{\ell}\}$ via {Eq.~\eqref{eq:updating}}
    \EndFor
  \end{algorithmic}
\end{algorithm}
\vspace{-10pt}



\subsection{Parameters Updating}
\label{sec:para_update}
In Section~\ref{sec:hallu_extract}, we described extracting hallucinatory components from hidden states. We update the parameters most responsible for hallucinations.

\noindent{\textbf{Parameter selection.}} Given the hallucination component $\tilde{\mathbf{X}}_\ell = [\tilde{\mathbf{x}}_{\ell,1}, \dots, \tilde{\mathbf{x}}_{\ell,N}]^\top \in \mathbb{R}^{N \times D}$ extracted at layer $\ell$, we aim to identify the weight parameters that are most responsible for hallucinated generation. Let $\mathbf{W}_\ell \in \mathbb{R}^{L \times D}$ denote the weight matrix at layer $\ell$, where each row $\mathbf{w}_\ell^{(i)}$ is a weight vector associated with a neuron.  To quantify the alignment between each weight and hallucinated semantics, we compute the average cosine similarity between $\mathbf{w}_\ell^{(i)}$ and the $\tilde{\mathbf{x}}_{\ell,j}$:
\begin{equation}\label{eq:weight_simi}
    s_i := \frac{1}{N} \sum_{j=1}^{N} \frac{\text{cos}(\mathbf{w}_\ell^{(i)}, \tilde{\mathbf{x}}_{\ell,j})}{\|\mathbf{w}_\ell^{(i)}\| \cdot \|\tilde{\mathbf{x}}_{\ell,j}\|}.
\end{equation}
We then select the top-$K$ weight vectors with the highest similarity scores $s_i$, and denote their indices by $\mathcal{I}_\ell^{\text{hall}} \subset \{1, \dots, L\}$. These weights are considered most aligned with hallucinated directions and are the targets of our editing step.


\noindent{\textbf{Parameter editing.}}
To suppress hallucination-related influence, we construct a projection operator that eliminates components lying in the hallucination subspace. Following~\cite{fang2024alphaedit}, we define the projection matrix onto the orthogonal complement (null space) of $\tilde{\mathbf{X}}_\ell$ as:
\begin{equation}\label{eq:proj_Q}
\tilde{\mathbf{Q}}_\ell = \mathbf{I} - \tilde{\mathbf{X}}_\ell^\top (\tilde{\mathbf{X}}_\ell \tilde{\mathbf{X}}_\ell^\top)^{-1} \tilde{\mathbf{X}}_\ell.
\end{equation}
This projection operator removes directions spanned by $\tilde{\mathbf{X}}_\ell$. By projecting onto this complement, we can remove the semantic directions associated with hallucinatory component, yielding purified features that better reflect grounded visual evidence. We apply $\tilde{\mathbf{Q}}_\ell$ to selectively edited weights. For each $i \in \mathcal{I}_\ell^{\text{hall}}$, we update the model parameters as follows:
\begin{equation}\label{eq:updating}
    \mathbf{w}_\ell^{(i)} \leftarrow \tilde{\mathbf{Q}}_\ell \mathbf{w}_\ell^{(i)}.
\end{equation}
This targeted projection suppresses hallucination-inducing capacity with minimal impact on unrelated behaviors, as only a small subset of weights is modified. The complete procedure of our $\textbf{\ours}$ is summarized in Algorithm~\ref{alg}.

\subsection{Theoretical Analysis}\label{sec:analy}
Sections~\ref{sec:hallu_extract} and~\ref{sec:para_update} have outlined how to extract hallucination features and suppress them via selective weight projection. We now provide a theoretical justification for using the projection-based residual as a principled estimator of hallucination-specific components.

\noindent\textbf{Justification of projection-based residual for hallucination extraction.} 
To extract the hallucinatory components, we decompose \(\mathbf{X}_\ell^{\text{hall}}\) into a parallel part \(\mathbf{X}_\ell^{\text{hall}, \parallel} = \mathbf{P}_\ell \mathbf{X}_\ell^{\text{hall}}\) within the subspace of faithful representations \(\mathbf{X}_\ell^+\), and an orthogonal part \(\mathbf{X}_\ell^{\text{hall}, \perp} = (\mathbf{I} - \mathbf{P}_\ell) \mathbf{X}_\ell^{\text{hall}}\), corresponding to disentangled hallucinatory components. We define \(\tilde{\mathbf{X}}_\ell = (\mathbf{I} - \mathbf{P}_\ell) \mathbf{X}_\ell^-\) as the extracted hallucinatory components from the response representation \(\mathbf{X}_\ell^-\). As a comparison, we consider the naive difference-based estimate \(\tilde{\mathbf{X}}_\ell^{\text{diff}} = \mathbf{X}_\ell^- - \mathbf{X}_\ell^+\), which entangles hallucination with shared semantics.

\begin{proposition}\label{prop:projection_residual}

For any \(\mathbf{X}_\ell^{\text{hall}, \parallel} = \mathbf{P}_\ell \mathbf{X}_\ell^{\text{hall}}\), the extracted hallucinatory components \(\tilde{\mathbf{X}}_\ell = (\mathbf{I} - \mathbf{P}_\ell) \mathbf{X}_\ell^-\) provide a more accurate estimation of the disentangled hallucinatory signal \(\mathbf{X}_\ell^{\text{hall}, \perp}\) than the naive difference-based estimate \(\tilde{\mathbf{X}}_\ell^{\text{diff}} = \mathbf{X}_\ell^- - \mathbf{X}_\ell^+\), in terms of expected squared error:
\[
\mathbb{E} \|\tilde{\mathbf{X}}_\ell - \mathbf{X}_\ell^{\text{hall}, \perp}\|_F^2 \leq \mathbb{E} \|\tilde{\mathbf{X}}_\ell^{\text{diff}} - \mathbf{X}_\ell^{\text{hall}, \perp}\|_F^2.
\]
\end{proposition}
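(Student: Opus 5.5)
Since the statement is informal, I will first fix a model and then compute both errors exactly. Write $\X_\ell^- = \X_\ell^{\text{real}} + \X_\ell^{\text{hall}} + \boldsymbol{\epsilon}^-$ and model the faithful representation as $\X_\ell^+ = \X_\ell^{\text{real}} + \boldsymbol{\epsilon}^+$. The noises $\boldsymbol{\epsilon}^\pm$ are taken zero-mean, independent of each other and of the signal terms, with isotropic covariance $\sigma^2$ per entry. I treat $\mathbf{P}_\ell$ as a fixed orthogonal projector: either I condition on $\X_\ell^+$, or I assume $\mathbf{P}_\ell$ is estimated from data independent of $\boldsymbol{\epsilon}^-$. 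The working assumption that makes the projection meaningful is that the grounded semantics lie in the faithful subspace, i.e. $(\mathbf{I}-\mathbf{P}_\ell)\X_\ell^{\text{real}} = 0$. This is what the paper means by "$\X_\ell^+$ approximates the grounded semantic subspace".

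\textbf{Step 1: error of the projection estimate.} Using the working assumption,
\[
\tilde{\X}_\ell - \X_\ell^{\text{hall},\perp} = (\mathbf{I}-\mathbf{P}_\ell)\boldsymbol{\epsilon}^-.
\]
Its expected squared Frobenius norm is $\sigma^2 \,\mathrm{tr}(\mathbf{I}-\mathbf{P}_\ell)\cdot D = \sigma^2 (N-C) D$. More generally it equals $\mathbb{E}\|(\mathbf{I}-\mathbf{P}_\ell)\boldsymbol{\epsilon}^-\|_F^2$, which is at most $\mathbb{E}\|\boldsymbol{\epsilon}^-\|_F^2$ because $\mathbf{I}-\mathbf{P}_\ell$ is a contraction.

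\textbf{Step 2: error of the difference estimate.} Here
\[
\tilde{\X}_\ell^{\text{diff}} - \X_\ell^{\text{hall},\perp} = \X_\ell^{\text{hall},\parallel} + \boldsymbol{\epsilon}^- - \boldsymbol{\epsilon}^+.
\]
By independence and zero mean the cross terms vanish, so the expectation is
\[
\|\X_\ell^{\text{hall},\parallel}\|_F^2 + \mathbb{E}\|\boldsymbol{\epsilon}^-\|_F^2 + \mathbb{E}\|\boldsymbol{\epsilon}^+\|_F^2 .
\]
If $\X_\ell^{\text{hall}}$ is random, the first term is taken in expectation and the cross term with the noise still vanishes.

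\textbf{Step 3: comparison.} Each of the three terms in Step 2 is nonnegative, and the middle one alone already dominates the Step 1 quantity $\mathbb{E}\|(\mathbf{I}-\mathbf{P}_\ell)\boldsymbol{\epsilon}^-\|_F^2$. The inequality therefore holds for any value of $\X_\ell^{\text{hall},\parallel}$, which matches the "for any" quantifier in the statement. It is strict unless $\X_\ell^{\text{hall},\parallel}=0$ and there is no noise. This makes precise the claim that the difference estimate "entangles hallucination with shared semantics": its error carries the parallel part plus doubled noise.

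\textbf{Main obstacle.} The real difficulty is justifying the modeling assumptions rather than the algebra. These are $(\mathbf{I}-\mathbf{P}_\ell)\X_\ell^{\text{real}}=0$ and the independence of $\mathbf{P}_\ell$ from $\boldsymbol{\epsilon}^-$, given that $\mathbf{P}_\ell$ is built from $\X_\ell^+$, which itself contains $\boldsymbol{\epsilon}^+$. I would handle this by stating the assumptions explicitly as part of the model and conditioning on $\X_\ell^+$. If one only has approximate containment, say $\|(\mathbf{I}-\mathbf{P}_\ell)\X_\ell^{\text{real}}\|_F \le \delta$, then Step 1 acquires an extra $\delta^2$ term. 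In that case the inequality holds whenever $\delta^2 \le \|\X_\ell^{\text{hall},\parallel}\|_F^2 + \mathbb{E}\|\boldsymbol{\epsilon}^+\|_F^2 + \mathbb{E}\|\mathbf{P}_\ell\boldsymbol{\epsilon}^-\|_F^2$, and I would note this as the robust version.
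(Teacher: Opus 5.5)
Your proposal is correct and follows the paper's proof essentially step for step. It uses the same signal-plus-noise model, the projection error reducing to $(\mathbf{I}-\mathbf{P}_\ell)\boldsymbol{\epsilon}^-$, the difference error $\X_\ell^{\text{hall},\parallel}+\boldsymbol{\epsilon}^- - \boldsymbol{\epsilon}^+$ with vanishing cross terms, and a termwise comparison. Your count $\sigma^2(N-C)D$ is actually the dimensionally consistent one for $\mathbf{P}_\ell=\U_\ell\U_\ell^\top\in\mathbb{R}^{N\times N}$ acting on the left, whereas the paper writes $\sigma_-^2(D-C)N$. Your contraction bound, together with your explicit handling of the dependence of $\mathbf{P}_\ell$ on $\X_\ell^+$ and of approximate containment, makes rigorous what the paper leaves as ``$\approx$''.
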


\begin{proof}

The extracted hallucinatory components is \(\tilde{\mathbf{X}}_\ell = (\mathbf{I} - \mathbf{P}_\ell) \mathbf{X}_\ell^-\), where \(\mathbf{P}_\ell\) projects onto the faithful subspace spanned by \(\mathbf{X}_\ell^+\). For \(\mathbf{X}_\ell^- = \mathbf{X}_\ell^{\text{real}} + \mathbf{X}_\ell^{\text{hall}} + \boldsymbol{\epsilon}^{-}\), with assuming \(\boldsymbol{\epsilon}^{-} \sim \mathcal{N}(0, \sigma_{-}^2 \mathbf{I})\), and \(\mathbf{X}_\ell^{\text{real}}\) lies in the faithful subspace (\(\mathbf{P}_\ell \mathbf{X}_\ell^{\text{real}} \approx \mathbf{X}_\ell^{\text{real}}\)), the error is:
\[
\tilde{\mathbf{X}}_\ell - \mathbf{X}_\ell^{\text{hall}, \perp} \approx (\mathbf{I} - \mathbf{P}_\ell) \boldsymbol{\epsilon}^{-},
\]
yielding:
\[
\mathbb{E} \|\tilde{\mathbf{X}}_\ell - \mathbf{X}_\ell^{\text{hall}, \perp}\|_F^2 = \sigma_{-}^2 (D - C) N.
\]
The difference-based residual is \(\tilde{\mathbf{X}}_\ell^{\text{diff}} = \mathbf{X}_\ell^- - \mathbf{X}_\ell^+\), with \(\mathbf{X}_\ell^+ = \mathbf{X}_\ell^{\text{real}} + \boldsymbol{\epsilon}^{+}\), \(\boldsymbol{\epsilon}^{+} \sim \mathcal{N}(0, \sigma_{+}^2 \mathbf{I})\). The error is:
\[
\tilde{\mathbf{X}}_\ell^{\text{diff}} - \mathbf{X}_\ell^{\text{hall}, \perp} = \mathbf{X}_\ell^{\text{hall}, \parallel} + \boldsymbol{\epsilon}^{-} - \boldsymbol{\epsilon}^{+}.
\]
Based on these, we have:
\[
\mathbb{E} \|\tilde{\mathbf{X}}_\ell^{\text{diff}} - \mathbf{X}_\ell^{\text{hall}, \perp}\|_F^2 = \|\mathbf{X}_\ell^{\text{hall}, \parallel}\|_F^2 + \sigma_{-}^2 D N + \sigma_{+}^2 D N.
\]
Since \(\sigma_{-}^2 (D - C) N < \sigma_{-}^2 D N\), and \(\|\mathbf{X}_\ell^{\text{hall}, \parallel}\|_F^2 \geq 0\), \(\sigma_{+}^2 \geq 0\) contribute additional non-negative terms, we have:
\[
\sigma_{-}^2 (D - C) N < \|\mathbf{X}_\ell^{\text{hall}, \parallel}\|_F^2 + \sigma_{-}^2 D N + \sigma_{+}^2D N.
\]
Thus, $\tilde{\mathbf{X}}_{\ell}$ provides a more accurate estimation. See Appendix~\ref{appendix:projection_proof} for details.
\end{proof}
\vspace{-8pt}

\begin{table*}[!hbpt]
\centering
\caption{CHAIR evaluation results on MSCOCO dataset of LVLMs (MiniGPT-4, mPLUG-Owl2, and LLaVA-1.5-7B) with different methods for mitigating OH. Lower $\text{CHAIR}_S$ and $\text{CHAIR}_I$ indicate less OH. Higher BLEU generally represents higher captioning quality. We use 64 as the max token number in this experiment. Bold indicates the best result of all methods.}
\tabstyle{1.5pt}
\renewcommand{\arraystretch}{1.2} 
\begin{tabular}{ l | c c c | c c c | c c c}
\toprule
\multirow{3}{*}{{Method}} 
&\multicolumn{3}{c|}{MiniGPT-4}
&\multicolumn{3}{c|}{mPLUG-Owl2}
&\multicolumn{3}{c}{LLaVA-1.5-7B} \\
\cmidrule{2-10}
& CHAIR$_S$$\downarrow$  & CHAIR$_I$$\downarrow$ & BLEU$\uparrow$
& CHAIR$_S$$\downarrow$  & CHAIR$_I$$\downarrow$ & BLEU$\uparrow$
& CHAIR$_S$$\downarrow$  & CHAIR$_I$$\downarrow$ & BLEU$\uparrow$
   \\ 
\midrule
Greedy
&$\text{32.40}_{\pm \text{2.20}}$ &$\text{12.20}_{\pm \text{0.42}}$ &$\text{14.57}_{\pm \text{0.11}}$ 
&$\text{22.90}_{\pm \text{0.90}}$ &$\text{8.62}_{\pm \text{0.11}}$ &$\text{15.01}_{\pm \text{0.24}}$  
&$\text{20.40}_{\pm \text{2.80}}$ &$\text{7.08}_{\pm \text{0.33}}$ &$\text{15.72}_{\pm \text{0.10}}$ 
\\
DoLa
&$\text{31.90}_{\pm \text{3.30}}$ &$\text{12.15}_{\pm \text{0.89}}$ &$\text{14.54}_{\pm \text{0.12}}$ 
&$\text{22.40}_{\pm \text{1.80}}$ &$\text{8.36}_{\pm \text{0.04}}$ &$\text{15.13}_{\pm \text{0.21}}$  
&$\text{20.20}_{\pm \text{2.80}}$ &$\text{6.75}_{\pm \text{0.54}}$ &$\text{15.68}_{\pm \text{0.10}}$ 
\\ 
OPERA
&$\text{29.70}_{\pm \text{0.30}}$ &$\text{11.96}_{\pm \text{0.29}}$ &$\text{14.82}_{\pm \text{0.05}}$ 
&$\text{20.07}_{\pm \text{2.07}}$ &$\text{7.18}_{\pm \text{0.39}}$ &$\text{15.41}_{\pm \text{0.12}}$ 
&$\text{17.50}_{\pm \text{0.50}}$ &$\text{6.07}_{\pm \text{0.32}}$ &$\text{16.02}_{\pm \text{0.02}}$ 
\\ 
VCD
&$\text{29.00}_{\pm \text{2.80}}$ &$\text{12.64}_{\pm \text{1.19}}$ &$\text{14.42}_{\pm \text{0.01}}$ 
&$\text{22.80}_{\pm \text{0.80}}$ &$\text{8.68}_{\pm \text{0.17}}$ &$\text{15.14}_{\pm \text{0.13}}$  
&$\text{20.30}_{\pm \text{1.10}}$ &$\text{7.28}_{\pm \text{0.10}}$ &$\text{14.53}_{\pm \text{0.01}}$ 
\\
LURE
&$\text{27.88}_{\pm \text{2.25}}$ &$\text{10.20}_{\pm \text{0.85}}$ &$\text{15.03}_{\pm \text{0.11}}$ 
&$\text{21.27}_{\pm \text{0.06}}$ &$\text{7.67}_{\pm \text{0.16}}$ &$\text{15.65}_{\pm \text{0.15}}$ 
&$\text{19.48}_{\pm \text{2.35}}$ &$\text{6.50}_{\pm \text{0.38}}$ &$\text{15.97}_{\pm \text{0.01}}$   
\\ 
HALC
&$\text{25.20}_{\pm \text{2.00}}$ &$\text{9.42}_{\pm \text{0.41}}$ &$\text{14.91}_{\pm \text{0.13}}$ 
&$\text{18.80}_{\pm \text{1.20}}$ &$\text{7.00}_{\pm \text{0.01}}$ &$\text{15.33}_{\pm \text{0.24}}$ 
&$\text{16.90}_{\pm \text{2.10}}$ &$\text{5.72}_{\pm \text{0.55}}$ &$\text{16.02}_{\pm \text{0.04}}$ 
\\ 
Nullu 
&$\text{{21.40}}_{\pm \text{1.20}}$ &$\text{{8.99}}_{\pm \text{0.56}}$ &$\text{14.81}_{\pm \text{0.06}}$ 
&$\text{{15.60}}_{\pm \text{1.50}}$ &$\text{{5.77}}_{\pm \text{0.011}}$ &$\text{15.45}_{\pm \text{0.13}}$ 
&$\text{{15.20}}_{\pm \text{0.50}}$ &$\text{{5.30}}_{\pm \text{0.13}}$ &$\text{15.69}_{\pm \text{0.07}}$ 
\\ \midrule
\rowcolor{mygray}
\textbf{\ours} 
&$\text{\textbf{19.40}}_{\pm \text{1.00}}$ &$\text{\textbf{7.50}}_{\pm \text{0.36}}$ &$\textbf{14.98}_{\pm \text{0.06}}$ 
&$\text{\textbf{14.00}}_{\pm \text{1.20}}$ &$\text{\textbf{4.99}}_{\pm \text{0.01}}$ &$\textbf{16.06}_{\pm \text{0.01}}$ 
&$\text{\textbf{12.80}}_{\pm \text{0.60}}$ &$\text{\textbf{4.20}}_{\pm \text{0.03}}$ &$\text{15.31}_{\pm \text{0.04}}$ 
\\ 
\bottomrule
\end{tabular}
\label{tab:chair_results}
\end{table*}

\begin{table*}[!ht]
  \centering
  \caption{
    Results on POPE. “Original” denotes direct inference using the original LVLMs, 
    while “Nullu” refers to models enhanced with our proposed editing method.
  }
    \tabstyle{8pt}
    \renewcommand{\arraystretch}{0.9} 
  \begin{tabular}{ccccccc}
    \toprule
    {Setting} & {Model} & {Method} & {Accuracy} & {Precision} & {Recall} & {F{1} Score} \\
    \midrule

    \multirow{10}{*}{\textit{random}} 
        & \multirow{3}{*}{MiniGPT4} 
            & Original & 57.33 & 53.66 & 97.13 & 70.04 \\
        &   & Nullu & 58.00 & 54.41 & 98.53 & 70.14 \\
        &   & \cellcolor{black!10}\textbf{\ours} & \cellcolor{black!10}\textbf{59.39} & \cellcolor{black!10}\textbf{55.67} & \cellcolor{black!10}\textbf{98.86} & \cellcolor{black!10}\textbf{71.81} \\
    \cmidrule{2-7}

        & \multirow{3}{*}{mPLUG-Owl2} 
            & Original & 81.83 & 77.80 & 89.07 & 83.06 \\
        &   & Nullu & 83.33 & 79.10 & 90.60 &84.46 \\
        &   & \cellcolor{black!10}\textbf{\ours} & \cellcolor{black!10}\textbf{85.63} & \cellcolor{black!10}\textbf{82.30} & \cellcolor{black!10}\textbf{91.56} & \cellcolor{black!10}\textbf{85.76} \\
    \cmidrule{2-7}
        
        & \multirow{3}{*}{LLaVA-1.5-7B} 
            & Original & 87.98 & 88.55 & 80.43 & 86.03 \\
        &   & Nullu & 88.23 & 91.31 & 81.86 & 87.43 \\
        &   & \cellcolor{black!10}\textbf{\ours} & \cellcolor{black!10}\textbf{89.31} & \cellcolor{black!10}\textbf{92.41} & \cellcolor{black!10}\textbf{84.10} & \cellcolor{black!10}\textbf{88.20} \\
    \midrule

    \multirow{10}{*}{\textit{popular}} 
        & \multirow{3}{*}{MiniGPT4} 
            & Original & 56.63 & 53.66 & 97.13 & 69.13 \\
        &   & Nullu & 52.66 & 51.38 & 98.53 & 67.57 \\
        &   & \cellcolor{black!10}\textbf{\ours} & \cellcolor{black!10}\textbf{53.96} & \cellcolor{black!10}\textbf{52.24} & \cellcolor{black!10}\textbf{98.75} & \cellcolor{black!10}\textbf{68.53} \\
    \cmidrule{2-7}

        & \multirow{3}{*}{mPLUG-Owl2} 
            & Original & 75.77 & 70.35 & 89.07 & 78.61 \\
        &   & Nullu & 77.47 & 71.75 & 90.60 & 80.08 \\
        &   & \cellcolor{black!10}\textbf{\ours} & \cellcolor{black!10}\textbf{79.57} & \cellcolor{black!10}\textbf{74.79} & \cellcolor{black!10}\textbf{91.80} & \cellcolor{black!10}\textbf{81.58} \\
    \cmidrule{2-7}
        
        & \multirow{3}{*}{LLaVA-1.5-7B} 
            & Original & 84.68 & 81.51 & 81.43 & 83.32 \\
        &   & Nullu & 85.35 & 84.35 & 82.13 & 84.93 \\
        &   & \cellcolor{black!10}\textbf{\ours} & \cellcolor{black!10}\textbf{86.37} & \cellcolor{black!10}\textbf{85.82} & \cellcolor{black!10}\textbf{83.93} & \cellcolor{black!10}\textbf{85.52} \\
    \midrule

    \multirow{10}{*}{\textit{adversarial}} 
        & \multirow{3}{*}{MiniGPT4} 
            & Original & 50.17 & 50.21 & 97.13 & 67.02 \\
        &   & Nullu & 51.90 & 50.98 & 97.86 & 67.04 \\
        &   & \cellcolor{black!10}\textbf{\ours} & \cellcolor{black!10}\textbf{52.83} & \cellcolor{black!10}\textbf{51.50} & \cellcolor{black!10}\textbf{98.53} & \cellcolor{black!10}\textbf{68.22} \\
    \cmidrule{2-7}

        & \multirow{3}{*}{mPLUG-Owl2} 
            & Original & 72.77 & 67.17 & 89.07 & 76.58 \\
        &   & Nullu & 73.23 & 67.25 & 90.40 & 77.15 \\
        &   & \cellcolor{black!10}\textbf{\ours} & \cellcolor{black!10}\textbf{75.23} & \cellcolor{black!10}\textbf{69.15} & \cellcolor{black!10}\textbf{91.60} & \cellcolor{black!10}\textbf{78.72} \\
    \cmidrule{2-7}
        
        & \multirow{3}{*}{LLaVA-1.5-7B} 
            & Original & 77.97 & 72.79 & 79.43 & 78.24 \\
        &   & Nullu & 78.61 & 75.51 & 79.86 & 79.07 \\
        &   & \cellcolor{black!10}\textbf{\ours} & \cellcolor{black!10}\textbf{79.56} & \cellcolor{black!10}\textbf{75.98} & \cellcolor{black!10}\textbf{82.26} & \cellcolor{black!10}\textbf{80.18} \\
    \bottomrule
    \end{tabular}
    \vspace{-10pt}
  \label{tab:pope}
\end{table*}

\section{Experiments}
In this section, we conduct experiments to address the following research questions (RQ):

\begin{itemize}[leftmargin=*]
    \item \textbf{RQ1:} Does MPD effectively mitigate object hallucinations in LVLMs across multiple settings, including sentence-level, instance-level, and representation-level metrics?
        
    \item \textbf{RQ2:} Does our method maintain the perception and reasoning abilities of LVLMs while suppressing hallucinations? 

    \item \textbf{RQ3}: How do the edited layer ranges, sample sizes, retained SVD components, and selected weights affect the hallucination suppression?

\end{itemize}

\subsection{Experimental Setup}\label{sec:exp}

We begin by outlining the evaluation benchmarks, metrics, and baseline methods used to assess our approach across multiple LVLMs.

\noindent{\textbf{Base models and baselines.}}  
Our experiments are conducted on three representative LVLMs: \textbf{MiniGPT-4 V2}~\citep{zhu2023minigpt}, \textbf{LLaVA-1.5-7B}~\citep{liu2023visual}, and \textbf{mPLUG-Owl2}~\citep{ye2023mplug}. For comparison, we include several state-of-the-art object hallucination (OH) mitigation methods: DoLa~\cite{chuang2023dola}, OPERA~\cite{huang2024opera}, VCD~\cite{leng2024mitigating}, LURE~\cite{zhou2023analyzing}, HALC~\cite{chen2024halc}, and Nullu~\cite{yang2025nullu}.  We use LURE~\cite{zhou2023analyzing} as the paired data following the previous work~\cite{yang2025nullu}, and prompt all methods with ``\textit{Please describe this image in detail.}''. 

\noindent{\textbf{Evaluation benchmarks and metrics.}}  
We evaluate OH mitigation performance using the following three categories of benchmarks. \textbf{MSCOCO-based metrics.} We adopt CHAIR~\citep{rohrbach2018object} and POPE~\citep{li2023evaluating} on the MSCOCO~\cite{lin2014microsoft} dataset to evaluate hallucination in image descriptions. CHAIR includes sentence-level ($\text{CHAIR}_S$) and instance-level ($\text{CHAIR}_I$) hallucination metrics, with lower scores indicating better grounding. POPE measures LVLMs' ability to correctly answer yes/no object presence queries under different negative sampling strategies. \textbf{MME.} The Multimodal Large Language Model Evaluation (MME) benchmark~\citep{fu2023mme} is used to assess perception and reasoning abilities across 14 tasks. We follow prior work~\citep{yin2023woodpecker, huang2024opera, chen2024halc} and focus on four hallucination-relevant subsets: ``Existence'', ``Count'', ``Position'', and ``Color''. \textbf{LLaVA-Bench.} The LLaVA-Bench~\citep{liu2023improved} benchmark comprises human-curated images and a diverse set of questions to evaluate open-ended captioning, reasoning, and conversational abilities. Details are provided in Appendix~\ref{sec:implementation}.

    


\subsection{Performance on Object Hallucinations (RQ1)}
To evaluate MPD’s effectiveness in mitigating object hallucinations, we conduct experiments on CHAIR and POPE, followed by an analysis of hidden representation alignment. 

\noindent{\textbf{Results on CHAIR.}} 
We evaluate the effectiveness of our method on object hallucination using the CHAIR benchmark, which includes both sentence-level (CHAIR$_{S}$) and instance-level (CHAIR$_{I}$) hallucination metrics, as well as BLEU for caption quality. Table~\ref{tab:chair_results} summarizes the performance across three representative LVLMs.

\begin{itemize}[leftmargin=*]
    \item \textbf{Obs 1: MPD significantly reduces sentence-level hallucinations while preserving caption quality.}  
    On all models, our method achieves the lowest CHAIR$_{S}$ scores: 12.80, 19.40, and 14.00, outperforming strong baselines such as Nullu (15.20, 20.30, 15.30). BLEU scores remain competitive, with the highest score on mPLUG-Owl2 (16.06) and solid results on the other two, indicating reduced hallucination without sacrificing generation quality.

    \item \textbf{Obs 2: There is improvement with MPD in fine-grained hallucination mitigation.}  
    In terms of CHAIR$_{I}$, our method outperforms DoLa, HALC, and Nullu across all models. This confirms its strength in reducing object-level hallucinations while maintaining descriptive relevance.
\end{itemize}

\noindent{\textbf{Results on POPE.}}  
We evaluate our method on the POPE benchmark~\cite{li2023evaluating}, which measures hallucination robustness under three query sampling strategies: \textit{random}, \textit{popular}, and \textit{adversarial}. Table~\ref{tab:pope} reports results on LLaVA-1.5-7B, MiniGPT-4, and mPLUG-Owl2. We also report results on OPOPE~\cite{li2023evaluating}, AMBER~\cite{wang2023amber}, and MMHalBench~\cite{sun2023aligning}. The results are provided in Appendix~\ref{sec:more_res}.

\begin{itemize}[leftmargin=*]
    \item \textbf{Obs 3: MPD consistently achieves the highest F{1} scores across all settings.}  
Under all sampling strategies, our method outperforms original models and the Nullu baseline. For example, in the \textit{random} setting, we achieve 88.20 (LLaVA-1.5-7B), 85.76 (mPLUG-Owl2), and 71.81 (MiniGPT-4), with similar gains in \textit{popular} and \textit{adversarial} settings. These results confirm our approach’s robustness and generalizability under varied hallucination conditions.

    \item \textbf{Obs 4: Complex visual scenes do not hinder the performance of MPD.}  
    Even in adversarial settings with distractors, our method maintains high accuracy. On mPLUG-Owl2, we reach 78.72 F$1$ score, outperforming original (76.58) and Nullu (77.15), demonstrating effectiveness.
\end{itemize}

\begin{table}[!t]
    \centering
    \tabstyle{5pt}
    \caption{Evaluation Results on HallusionBench following the setting in ~\cite{HallusionBench}.}
    \label{tab:hallusionbench}
    \begin{tabular}{llllll}
        \toprule
        Model & fACC & qACC & easyA & hardA & aACC \\
        \midrule
        LLaVA-1.5-7B & 17.9 & 8.1 & 36.0 & 36.7 & 41.5 \\
        Nullu & 18.7 & 8.6 & 36.6 & 37.4 & 44.2 \\
        \textbf{\ours} & 19.9 & 9.0 & 37.3 & 38.1 & 44.3 \\
        \midrule
        MiniGPT-4 & 10.1 & 8.7 & 31.8 & 27.6 & 35.7 \\
        Nullu & 10.7 & 9.5 & 32.3 & 28.1 & 36.8 \\
        \textbf{\ours} & 11.5 & 10.2 & 33.1 & 28.7 & 37.3 \\
        \midrule
        mPLUG-Owl2 & 19.9 & 13.8 & 44.8 & 39.1 & 47.3 \\
        Nullu & 10.7 & 9.5 & 32.3 & 28.1 & 36.8 \\
        \textbf{\ours} & 11.5 & 10.2 & 33.1 & 28.7 & 37.3 \\
        \bottomrule
    \end{tabular}
    \vspace{-10pt}
\end{table}
\noindent{\textbf{Results on Hallusionbench.}}
HallusionBench dataset~\cite{HallusionBench} evaluates fine-grained hallucinations beyond object existence.
As shown in Table~\ref{tab:hallusionbench}, \ours consistently improves all metrics over the base models and Nullu across three LVLM backbones, including gains on challenging subsets such as \textit{hardA} and \textit{aACC}.
These results indicate that our method generalizes beyond object-centric hallucinations.
Further evidence on broader hallucination benchmarks is provided in Appendix~\ref{sec:more_res}, as shown in Table~\ref{tab:non-object}.

\begin{table}[!ht]
    \centering
    \tabstyle{5pt}
    \caption{Results on LLaVA-Bench following the setting in~\cite{leng2024mitigating}. Both metrics are on a scale of 10.}
    \renewcommand{\arraystretch}{1.1}
    \begin{tabular}{lccc}
        \toprule
        {Model} & {Method} & {Accuracy} & {Detailedness} \\
        \midrule
        \multirow{2}{*}{MiniGPT-4} & Original & 4.05 & 3.95 \\
        & \textbf{\ours} & \textbf{5.53} & \textbf{4.67} \\
        \midrule
        \multirow{2}{*}{mPLUG-Owl2} & Original & 5.76 & 4.22 \\
        & \textbf{\ours} & \textbf{6.13} & \textbf{4.62} \\
        \midrule
        \multirow{2}{*}{LLaVA-1.5-7B} & Original & 5.59 & 4.72 \\
        & \textbf{\ours} & \textbf{6.39} & \textbf{5.52} \\
        \bottomrule
    \end{tabular}
    \label{tab:gpt4v}
    \vspace{-10pt}
\end{table}

\subsection{Performance on Generative Capacity (RQ2)}\label{sec:gpt4v_eval}
To assess the general generative capacity of MPD beyond structured object grounding, we evaluate it on LLaVA-Bench and MME, which cover open-ended and structured settings~\cite{leng2024mitigating}. More results are provided in Appendix~\ref{sec:more_res}.

\noindent{\textbf{Results on LLaVA-Bench.}}  
Table~\ref{tab:gpt4v} reports GPT-4V-evaluated scores on \textit{Accuracy} and \textit{Detailedness}. MPD consistently improves both dimensions across all models. On LLaVA-1.5-7B, it raises accuracy from 5.59 to 6.39 and detailedness from 4.72 to 5.52, indicating enhanced factual alignment and richer responses without harming fluency.

\begin{itemize}[leftmargin=*]
    \item \textbf{Obs 5: Both grounding and response quality are improved by MPD.}  
    The gains in both accuracy and detailedness suggest that MPD effectively reduces hallucinations while preserving the model’s generative expressiveness.
\end{itemize}

\begin{figure}[!t]
    \centering
    \includegraphics[width=0.8\linewidth]{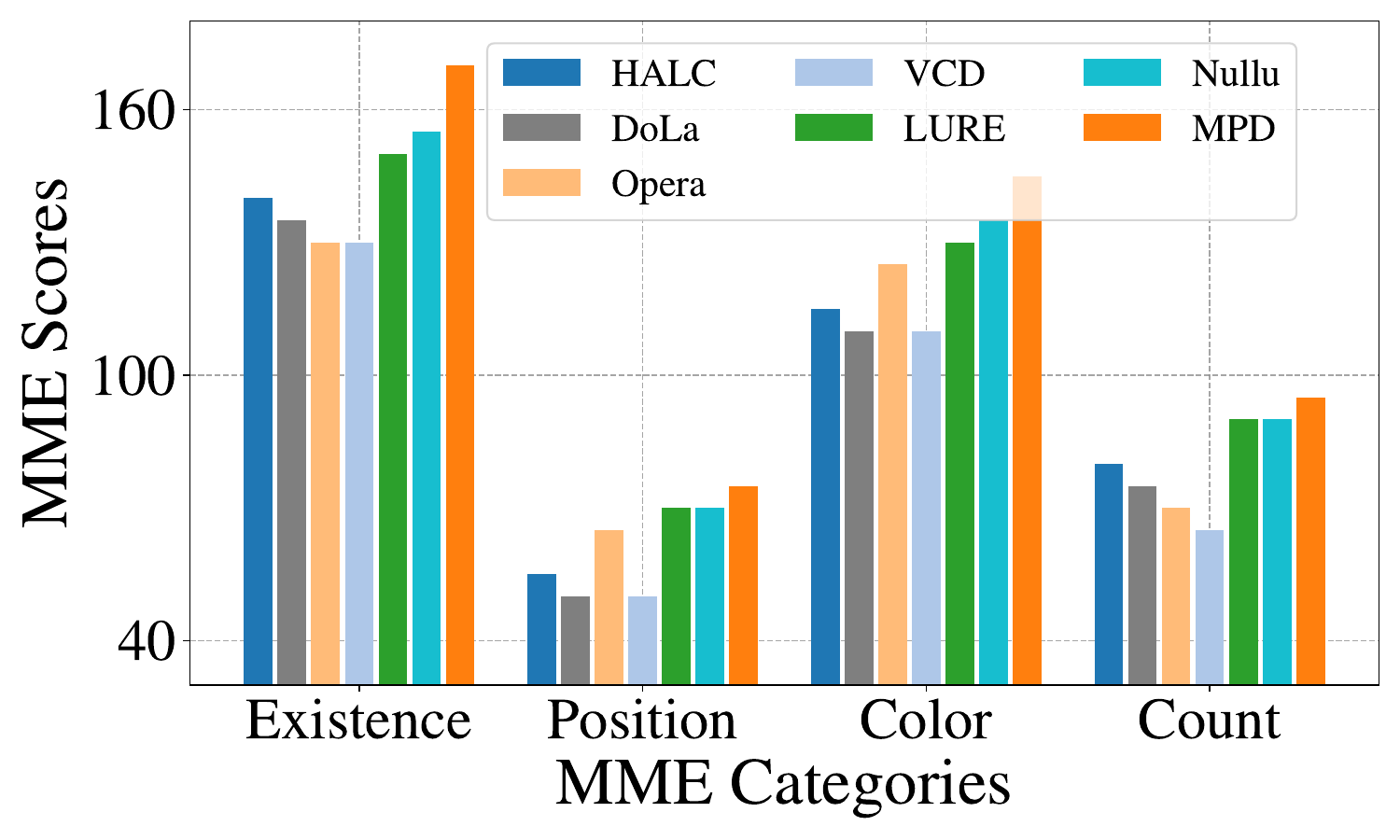}
    \vspace{-10pt}
    \caption{Comparisons of MME scores.}
    \label{fig:mme}
\end{figure}
\noindent{\textbf{Results on MME.}}  
Figure~\ref{fig:mme} summarizes results for four hallucination-sensitive subsets. MPD achieves the best performance on \textit{Existence} and \textit{Count}, measuring object hallucination, and remains competitive on \textit{Position} and \textit{Color}.
\begin{itemize}[leftmargin=*]
    \item \textbf{Obs 6: MPD strengthens robustness under structured hallucination evaluation.}  
    The improvements on \textit{Existence} and \textit{Count} confirm that MPD enhances grounding accuracy in perception-oriented tasks, further supporting its generalization across evaluation paradigms.
\end{itemize}

\begin{figure*}[!ht]
    \centering
    \includegraphics[width=0.85\linewidth]{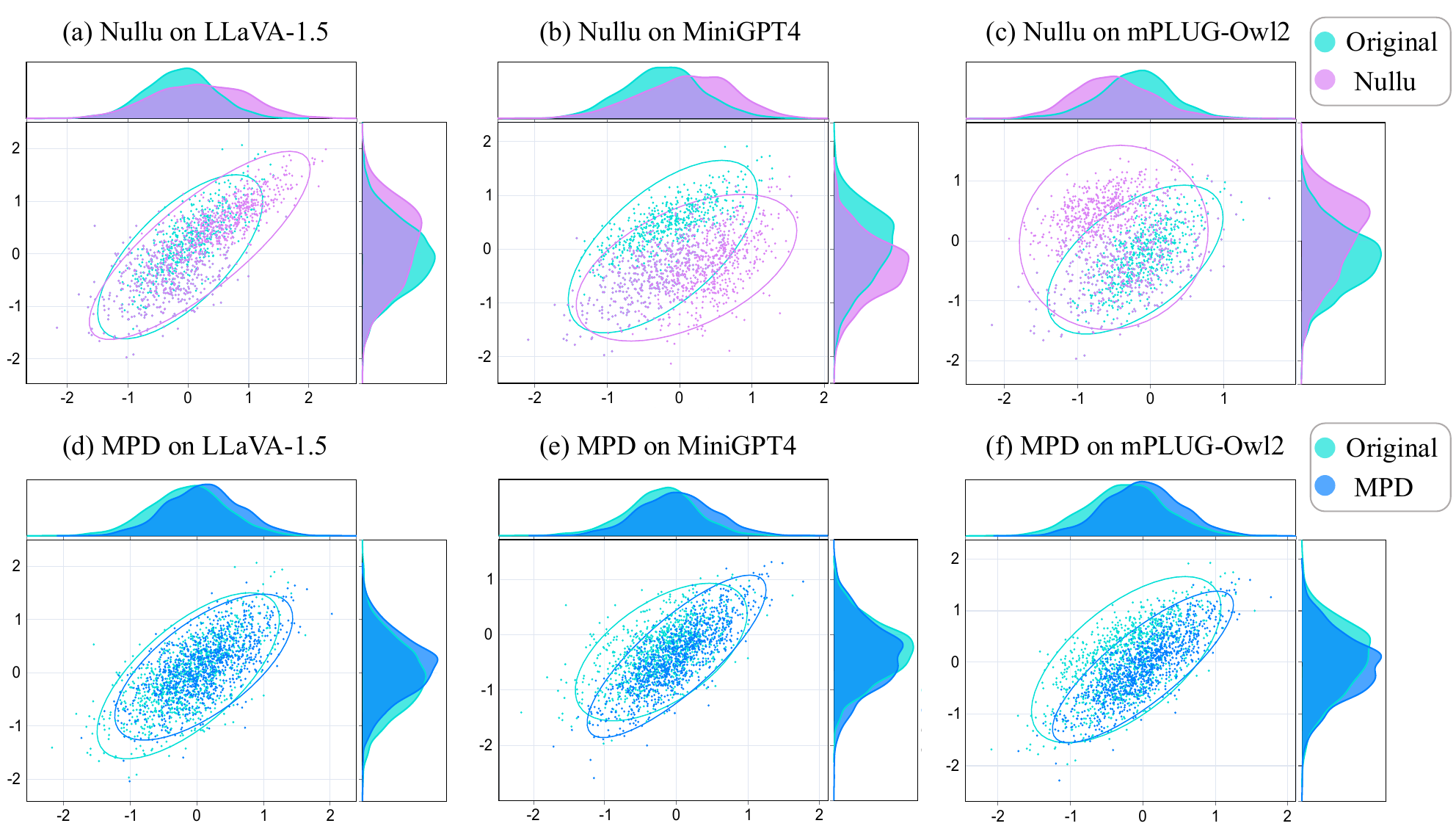}
    \caption{Distribution of hidden representations before and after editing across different LVLMs. Each subplot visualizes the principal components of token embeddings using PCA, comparing the original model and the edited version. The top and right marginal curves show the distributions along each principal axis. Compared to Nullu, our method (MPD) induces smaller representation shifts while achieving more effective hallucination suppression.}
    \label{fig:distribution}
\end{figure*}

\noindent{\textbf{Analysis of hidden representations.}} To assess the impact of editing on internal representations, we examine whether our method preserves the distributional structure of hidden features. We sample 1,000 faithful prompts and extract their token-level representations from the original LVLMs. After applying edits via Nullu and MPD, we re-extract the features, project them into two dimensions using PCA, and visualize them along with their marginal distributions. As shown in Figure~\ref{fig:distribution}, MPD maintains close alignment with the original features, while Nullu introduces noticeable shifts.

\begin{table}[!t]
    \centering
    \tabstyle{5pt}
    \caption{Analysis of editing different layers.}
    \renewcommand{\arraystretch}{1.2}
    \begin{tabular}{l|ccc}
        \toprule
        $\{\ell\}$ &{CHAIR}$_S\downarrow$ &{CHAIR}$_I\downarrow$ &BLEU$\uparrow$ \\ 
        \midrule
        16-32 & \textbf{12.80} & \textbf{4.20} & \textbf{15.31} \\  
        18-32 & 13.20 & 5.31 & 15.00 \\ 
        20-32 & 13.50 & 5.52 & 14.97 \\
        22-32 & 14.03 & 5.95 & 14.80 \\ 
        24-32 & 14.97 & 6.41 & 14.73 \\ 
        26-32 & 16.95 & 6.87 & 14.72 \\ 
        28-32 & 18.41 & 7.32 & 14.70 \\ 
        30-32 & 20.10 & 7.75 & 14.70 \\ 
        \bottomrule
    \end{tabular}
    \vspace{-10pt}
    \label{tab:ablation_layers}
\end{table}

\subsection{Ablation Study (RQ3)}
To analyze the contribution of each module in MPD, we perform ablation studies on editing layers and retained subspace dimensions using LLaVA-1.5-7B across CHAIR, POPE, and MME. More ablations are provided in Appendix~\ref{sec:abla_size} and ~\ref{sec:abla_weights}.

\noindent{\textbf{Impact of editing layers.}} To assess the impact of editing layers, we perform an ablation study on LLaVA-1.5-7B using CHAIR$_S$, CHAIR$_I$, and BLEU metrics, as shown in Table~\ref{tab:ablation_layers}.

\begin{itemize}[leftmargin=*]
    \item \textbf{Obs 7: Deeper edits yield hallucination suppression.}  
    Editing from deeper layers (\eg, 22–32 or 24–32) improves CHAIR scores, with the best result achieved at 16–32 (CHAIR$_S$ = 12.80, CHAIR$_I$ = 4.20). In contrast, narrow or shallow ranges (\eg, 30–32 or 26–32) result in weaker performance and no BLEU gain, indicating insufficient removal of hallucinated components.
\end{itemize}

\section{Conclusion}\label{sec:limitions}
We present MPD, an optimization framework that mitigates hallucinations in LVLMs without degrading generative capacity. It disentangles hallucinatory components via orthogonal projection on contrastive representations and selectively updates parameters most correlated with these components using cosine similarity, thus avoiding large-scale perturbations. Extensive evaluations across multiple LVLMs and benchmarks show that MPD suppresses hallucinations while retaining 97.4\% of generative capacity with no added inference cost.

\section*{Limitations}
While MPD improves alignment on current benchmarks, it does not address hallucinations 
arising from broader data biases or prompt limitations. Moreover, automated benchmarks 
may not fully capture long-form coherence or stylistic diversity. Its robustness under 
highly ambiguous visual inputs remains to be evaluated.

\section*{Acknowledgements}
This research was supported by the National Natural Science Foundation of China (U24B20180, 62525211, 62576330) and the Natural Science Foundation of Anhui Province (2508085MF143).
\bibliography{custom}

\newpage
\appendix


\section{Proof of Proposition~\ref{prop:projection_residual}}
\label{appendix:projection_proof}

We provide a detailed proof that the extracted hallucinatory component \(\tilde{\mathbf{X}}_\ell = (\mathbf{I} - \mathbf{P}_\ell) \mathbf{X}_\ell^-\) is more accurate than the difference-based residual \(\tilde{\mathbf{X}}_\ell^{\text{diff}} = \mathbf{X}_\ell^- - \mathbf{X}_\ell^+\) in estimating the orthogonal component of the hallucinated semantics \(\mathbf{X}_\ell^{\text{hall}, \perp}\), without assuming that the parallel component \(\mathbf{X}_\ell^{\text{hall}, \parallel} = \mathbf{P}_\ell \mathbf{X}_\ell^{\text{hall}}\) is small.

Given:
\begin{itemize}
    \item \(\mathbf{X}_\ell^+ \in \mathbb{R}^{N \times D}\): Faithful representations, spanning the faithful subspace.
    \item \(\mathbf{X}_\ell^- = \mathbf{X}_\ell^{\text{real}} + \mathbf{X}_\ell^{\text{hall}} + \boldsymbol{\epsilon}^{-}\): Hallucinated representations, composed of real semantics, hallucinated components, and Gaussian noise \(\boldsymbol{\epsilon}^{-} \sim \mathcal{N}(0, \sigma_{-}^2 \mathbf{I})\).
    \item \(\mathbf{P}_\ell = \mathbf{U}_\ell \mathbf{U}_\ell^\top\): Orthogonal projection onto the subspace spanned by top-\(C\) singular vectors \(\mathbf{U}_\ell\) from the SVD of \(\mathbf{X}_\ell^+\).
    \item \(\mathbf{X}_\ell^{\text{hall}} = \mathbf{X}_\ell^{\text{hall}, \parallel} + \mathbf{X}_\ell^{\text{hall}, \perp}\): Decomposition into in-subspace and orthogonal components w.r.t. \(\mathbf{P}_\ell\).
\end{itemize}

The goal is to show:
\[
\mathbb{E} \|\tilde{\mathbf{X}}_\ell - \mathbf{X}_\ell^{\text{hall}, \perp}\|_F^2 \leq \mathbb{E} \|\tilde{\mathbf{X}}_\ell^{\text{diff}} - \mathbf{X}_\ell^{\text{hall}, \perp}\|_F^2.
\]

The projection-based residual is:
\[
\tilde{\mathbf{X}}_\ell = (\mathbf{I} - \mathbf{P}_\ell) \mathbf{X}_\ell^- = (\mathbf{I} - \mathbf{P}_\ell) (\mathbf{X}_\ell^{\text{real}} + \mathbf{X}_\ell^{\text{hall}} + \boldsymbol{\epsilon}^{-}).
\]
Decompose:
\[
\tilde{\mathbf{X}}_\ell = (\mathbf{I} - \mathbf{P}_\ell) \mathbf{X}_\ell^{\text{real}} + (\mathbf{I} - \mathbf{P}_\ell) \mathbf{X}_\ell^{\text{hall}} + (\mathbf{I} - \mathbf{P}_\ell) \boldsymbol{\epsilon}^{-}.
\]
Since \(\mathbf{P}_\ell \mathbf{X}_\ell^{\text{real}} \approx \mathbf{X}_\ell^{\text{real}}\), we have \((\mathbf{I} - \mathbf{P}_\ell) \mathbf{X}_\ell^{\text{real}} \approx \mathbf{0}\). For the hallucinated semantics, \(\mathbf{X}_\ell^{\text{hall}} = \mathbf{X}_\ell^{\text{hall}, \parallel} + \mathbf{X}_\ell^{\text{hall}, \perp}\), so:
\[
(\mathbf{I} - \mathbf{P}_\ell) \mathbf{X}_\ell^{\text{hall}} = \mathbf{X}_\ell^{\text{hall}, \perp},
\]
since \((\mathbf{I} - \mathbf{P}_\ell) \mathbf{X}_\ell^{\text{hall}, \parallel} = \mathbf{0}\). Thus:
\[
\tilde{\mathbf{X}}_\ell \approx \mathbf{X}_\ell^{\text{hall}, \perp} + (\mathbf{I} - \mathbf{P}_\ell) \boldsymbol{\epsilon}^{-}.
\]
The error is:
\begin{align*}
\mathbb{E} \|\tilde{\mathbf{X}}_\ell - \mathbf{X}_\ell^{\text{hall}, \perp}\|_F^2 
&= \mathbb{E} \| (\mathbf{I} - \mathbf{P}_\ell) \boldsymbol{\epsilon}^{-} \|_F^2 \\
&= \sigma_{-}^2 (D - C) N.
\end{align*}
since \(\boldsymbol{\epsilon}^{-} \sim \mathcal{N}(0, \sigma_{-}^2 \mathbf{I})\) and \(\mathbf{I} - \mathbf{P}_\ell\) projects onto a \((D - C)\)-dimensional subspace.

The difference-based residual is:
\begin{align*}
\tilde{\mathbf{X}}_\ell^{\text{diff}} 
&= \mathbf{X}_\ell^- - \mathbf{X}_\ell^+ \\
&= (\mathbf{X}_\ell^{\text{real}} + \mathbf{X}_\ell^{\text{hall}} + \boldsymbol{\epsilon}^{-}) 
   - (\mathbf{X}_\ell^{\text{real}} + \boldsymbol{\epsilon}^{+}) \\
&= \mathbf{X}_\ell^{\text{hall}} + \boldsymbol{\epsilon}^{-} - \boldsymbol{\epsilon}^{+}.
\end{align*}
The error is:
\[
\tilde{\mathbf{X}}_\ell^{\text{diff}} - \mathbf{X}_\ell^{\text{hall}, \perp} = \mathbf{X}_\ell^{\text{hall}, \parallel} + \boldsymbol{\epsilon}^{-} - \boldsymbol{\epsilon}^{+}.
\]
The expected squared error is:
\[
\mathbb{E} \|\tilde{\mathbf{X}}_\ell^{\text{diff}} - \mathbf{X}_\ell^{\text{hall}, \perp}\|_F^2 = \|\mathbf{X}_\ell^{\text{hall}, \parallel}\|_F^2 + \mathbb{E} \| \boldsymbol{\epsilon}^{-} \|_F^2 + \mathbb{E} \| \boldsymbol{\epsilon}^{+} \|_F^2,
\]
since cross terms vanish under independence. Given \(\mathbb{E} \| \boldsymbol{\epsilon}^{-} \|_F^2 = \sigma_{-}^2 D N\) and \(\mathbb{E} \| \boldsymbol{\epsilon}^{+} \|_F^2 = \sigma_{+}^2 D N\):
\[
\mathbb{E} \|\tilde{\mathbf{X}}_\ell^{\text{diff}} - \mathbf{X}_\ell^{\text{hall}, \perp}\|_F^2 = \|\mathbf{X}_\ell^{\text{hall}, \parallel}\|_F^2 + \sigma_{-}^2 D N + \sigma_{+}^2 D N.
\]

Comparing the errors, the hallucinatory component has error \(\sigma_{-}^2 (D - C) N\), while the difference-based residual has error \(\|\mathbf{X}_\ell^{\text{hall}, \parallel}\|_F^2 + \sigma_{-}^2 D N + \sigma_{+}^2 D N\). Since \(D - C < D\), we have \(\sigma_{-}^2 (D - C) N < \sigma_{-}^2 D N\). The additional terms \(\|\mathbf{X}_\ell^{\text{hall}, \parallel}\|_F^2 \geq 0\) and \(\sigma_{+}^2 D N \geq 0\) ensure:
\[
\sigma_{-}^2 (D - C) N < \|\mathbf{X}_\ell^{\text{hall}, \parallel}\|_F^2 + \sigma_{-}^2 D N + \sigma_{+}^2 D N.
\]
The $\tilde{\mathbf{X}}_\ell$ isolates \(\mathbf{X}_\ell^{\text{hall}, \perp}\), corresponding to unique hallucinated concepts, and its error is unaffected by \(\mathbf{X}_\ell^{\text{hall}, \parallel}\), ensuring robustness.

\section{Implementation Details}\label{sec:implementation}
We follow the experimental setup in previous work~\cite{yang2025nullu}. During the decoding stage, beam search is employed with \textit{num-beams} set to 3, indicating that three candidate sequences are retained at each step. For all models, the editing layers are selected within the range $\ell \in \text{range}(16,32)$. The number of retained principal components is set to top-$C = 2500$, and the number of weights with the highest similarity scores is set to top-$K = 6000$. All experiments are conducted on a single A40 GPU (40G).

We construct faithful and hallucination-inducing contrastive pairs following the protocol in~\cite{yang2025nullu, zhou2023analyzing}.
For each image and prompt, an auxiliary LLM is used to generate a faithful response grounded in the image and a hallucination-inducing response that introduces non-existent objects or attributes, and this construction is performed once offline before editing.

\begin{table*}[htbp]
\centering
\caption{The OPOPE evaluation results on MSCOCO dataset of LVLMs with different methods for mitigating OH. Higher accuracy, precision, and F1 score indicate better performance. Bold indicates the best result of all methods.}
\tabstyle{1.0pt}
\begin{tabular}{ l | c c c | c c c | c c c}
\toprule
\multirow{2}{*}{\textbf{Method}} 
&\multicolumn{3}{c|}{MiniGPT-4}
&\multicolumn{3}{c|}{mPLUG-Owl2}
&\multicolumn{3}{c}{LLaVA-1.5} \\
\cmidrule{2-10}
&{Accuracy} &{Precision}  &{F1 score} 
&{Accuracy} &{Precision}  &{F1 score} 
&{Accuracy} &{Precision}  &{F1 score}      \\ 
\midrule
Greedy 
& $\text{66.78}_{\pm\text{1.27}}$ & $\text{90.43}_{\pm\text{25.1}}$ & $\text{85.79}_{\pm\text{18.7}}$
& $\text{69.77}_{\pm\text{1.18}}$ & $\text{91.07}_{\pm\text{17.8}}$ & $\text{87.45}_{\pm\text{13.9}}$
& $\text{70.56}_{\pm\text{1.51}}$ & $\text{91.08}_{\pm\text{20.6}}$ & $\text{87.72}_{\pm\text{16.3}}$ \\
DoLA 
& $\text{67.06}_{\pm\text{1.19}}$ & $\text{90.84}_{\pm\text{23.1}}$ & $\text{86.22}_{\pm\text{17.3}}$
& $\text{70.17}_{\pm\text{1.69}}$ & $\text{91.97}_{\pm\text{24.5}}$ & $\text{88.30}_{\pm\text{19.2}}$
& $\text{70.69}_{\pm\text{1.50}}$ & $\text{90.87}_{\pm\text{19.8}}$ & $\text{87.59}_{\pm\text{15.74}}$ \\
OPERA 
& $\text{67.26}_{\pm\text{1.04}}$ & $\text{90.76}_{\pm\text{20.0}}$ & $\text{86.25}_{\pm\text{15.0}}$
& $\text{69.26}_{\pm\text{0.45}}$ & $\text{93.06}_{\pm\text{8.01}}$ & $\text{88.83}_{\pm\text{6.14}}$
& $\text{69.73}_{\pm\text{1.34}}$ & $\text{91.10}_{\pm\text{19.4}}$ & $\text{87.46}_{\pm\text{15.3}}$ \\
VCD 
& $\text{65.78}_{\pm\text{0.96}}$ & $\text{90.02}_{\pm\text{20.7}}$ & $\text{85.00}_{\pm\text{15.1}}$
& $\text{69.81}_{\pm\text{0.65}}$ & $\text{92.70}_{\pm\text{11.0}}$ & $\text{88.76}_{\pm\text{8.49}}$
& $\text{70.67}_{\pm\text{1.22}}$ & $\text{91.62}_{\pm\text{16.7}}$ & $\text{88.19}_{\pm\text{13.3}}$ \\
LURE 
& $\text{68.14}_{\pm\text{0.99}}$ & $\text{90.95}_{\pm\text{17.34}}$ & $\text{86.76}_{\pm\text{13.2}}$
& $\text{69.24}_{\pm\text{1.60}}$ & $\text{90.54}_{\pm\text{23.3}}$ & $\text{86.85}_{\pm\text{18.2}}$
& $\text{70.00}_{\pm\text{1.53}}$ & $\text{90.89}_{\pm\text{21.9}}$ & $\text{87.38}_{\pm\text{17.3}}$ \\
HALC 
& $\text{66.76}_{\pm\text{0.68}}$ & $\text{91.95}_{\pm\text{15.0}}$ & $\text{86.92}_{\pm\text{11.1}}$
& $\text{70.12}_{\pm\text{0.98}}$ & $\text{91.94}_{\pm\text{15.1}}$ & $\text{88.26}_{\pm\text{11.8}}$
& $\text{70.59}_{\pm\text{0.82}}$ & $\text{92.94}_{\pm\text{12.1}}$ & $\text{89.22}_{\pm\text{9.55}}$ \\
Nullu
&$\text{\text{68.81}}_{\pm \text{0.59}}$ &$\text{\text{96.49}}_{\pm \text{0.85}}$ &$\text{\text{91.21}}_{\pm \text{0.65}}$  
&$\text{\text{71.04}}_{\pm \text{1.07}}$ &$\text{\text{96.30}}_{\pm \text{0.39}}$ &$\text{\text{92.05}}_{\pm \text{0.72}}$  
&$\text{\text{72.52}}_{\pm \text{0.14}}$ &$\text{\text{94.46}}_{\pm \text{0.08}}$ &$\text{\text{92.79}}_{\pm \text{0.14}}$ 
\\ \midrule
\rowcolor{mygray}
\textbf{\ours}
&$\text{\textbf{69.79}}_{\pm \text{0.60}}$ &$\text{\textbf{97.83}}_{\pm \text{0.16}}$ &$\text{\textbf{92.60}}_{\pm \text{0.08}}$ 
&$\text{\textbf{72.86}}_{\pm \text{1.00}}$ &$\text{\textbf{97.78}}_{\pm \text{0.03}}$ &$\text{\textbf{93.50}}_{\pm \text{0.02}}$ 
&$\text{\textbf{73.50}}_{\pm \text{0.45}}$ &$\text{\textbf{96.32}}_{\pm \text{0.05}}$ &$\text{\textbf{93.40}}_{\pm \text{0.07}}$ 
\\ 
\bottomrule
\end{tabular}
\label{tab:opope_results}
\end{table*}

\begin{table*}[!ht]
\centering
\caption{Comparison of hallucination mitigation methods in terms of object coverage (\textbf{Cover}) and hallucination rate (\textbf{HalRate}). Higher \textbf{Cover} and lower \textbf{HalRate} indicate better performance. Bold numbers denote the best results per model.
}
\tabstyle{5pt}
\begin{tabular}{ l | c c | c c | c c }
\toprule
\multirow{2}{*}{\textbf{Method}} 
&\multicolumn{2}{c|}{MiniGPT-4}
&\multicolumn{2}{c|}{mPLUG-Owl2}
&\multicolumn{2}{c}{LLaVA-1.5} \\
\cmidrule{2-7}
&{Cover}~($\uparrow$) &{HalRate}~($\downarrow$)
&{Cover}~($\uparrow$) &{HalRate}~($\downarrow$)
&{Cover}~($\uparrow$) &{HalRate}~($\downarrow$) \\ 
\midrule
Greedy & 65.2 & 69.2 & 53.5 & 41.8 & 50.5 & 26.4 \\
VCD    & 52.1 & 46.1 & 63.2 & 70.9 & 50.9 & 25.3 \\
HALC   & 62.6 & 65.9  & 52.7 & 39.2 & 49.9 & 25.9 \\
Nullu  & 57.7 & 70.1  & 53.2 & 41.6 & 49.0 & 26.3 \\
\midrule
\rowcolor{mygray}
\textbf{\ours} & $\textbf{63.2}$ & $\textbf{45.7}$ & $\textbf{53.4}$ & $\textbf{38.5}$ & $\textbf{51.2}$ & $\textbf{25.1}$ \\
\bottomrule
\end{tabular}
\label{tab:AMBER}
\end{table*}

\begin{table*}[!ht]
\centering
\caption{Evaluation of different methods on hallucination severity and grounding quality. Higher \textbf{Score} and lower \textbf{HalRate} reflect stronger grounding. Bold values highlight the best performance for each model.
}
\tabstyle{5pt}
\begin{tabular}{ l | c c | c c | c c }
\toprule
\multirow{2}{*}{\textbf{Method}} 
&\multicolumn{2}{c|}{MiniGPT-4}
&\multicolumn{2}{c|}{mPLUG-Owl2}
&\multicolumn{2}{c}{LLaVA-1.5} \\
\cmidrule{2-7}
&\textbf{Score}~($\uparrow$) &\textbf{HalRate}~($\downarrow$)
&\textbf{Score}~($\uparrow$) &\textbf{HalRate}~($\downarrow$)
&\textbf{Score}~($\uparrow$) &\textbf{HalRate}~($\downarrow$) \\ 
\midrule
Greedy & 1.61 & 0.71  & 2.00 & 0.65 & 2.26 & 0.60 \\
VCD    & 1.68 & 0.71  & 2.15 & 0.62 & 1.97 & 0.67 \\
HALC   & 1.66 & 0.71  & 2.04 & 0.64 & 2.27 & 0.60 \\
Nullu  & 1.32 & 0.77  & 2.19 & 0.61 & 1.61 & 0.71 \\
\midrule
\rowcolor{mygray}
\textbf{\ours} & $\textbf{1.69}$ & $\textbf{0.79}$ & $\textbf{2.23}$ & $\textbf{0.59}$ & $\textbf{1.57}$ & $\textbf{0.58}$ \\
\bottomrule
\end{tabular}
\label{tab:MMHal}
\end{table*}

\begin{figure}[!ht]
    \centering
    \includegraphics[width=0.95\linewidth]{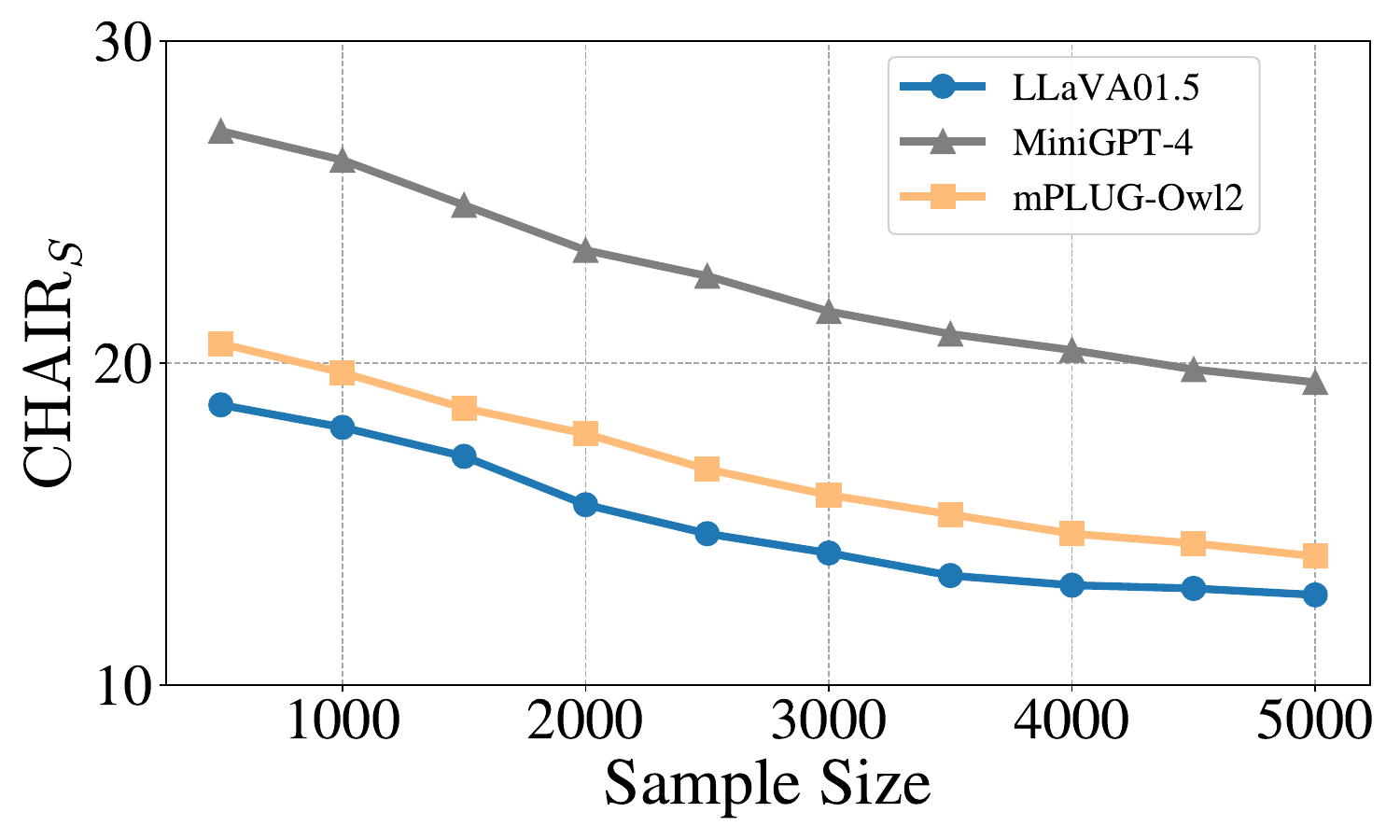}
    \caption{Comparisons of $\text{CHAIR}_{S}$ trends under varying the number of samples.}
    \label{fig:editnumber}
\end{figure}

\begin{table}[t]
    \centering
    \tabstyle{1pt}
    \caption{Analysis of selected weights in model.}
    \vspace{-5pt}
    \label{tab:selected_weights}
    \begin{tabular}{ccccc}
        \toprule
        Values of $K$ & POPE Acc. & CHAIR$_S$ & CHAIR$_I$ & MME \\
        \midrule
        $1 \times 10^3$ & 83.47 & 14.5 & 5.6 & 613.23 \\
        $2 \times 10^3$ & 83.73 & 14.3 & 5.3 & 616.64 \\
        $3 \times 10^3$ & 84.05 & 14.0 & 5.1 & 619.92 \\
        $4 \times 10^3$ & 84.37 & 13.6 & 4.8 & 622.20 \\
        $5 \times 10^3$ & 84.60 & 13.1 & 4.4 & 628.68 \\
        $6 \times 10^3$ & 84.90 & 13.3 & 4.6 & 623.89 \\
        $7 \times 10^3$ & 84.85 & 13.2 & 4.6 & 624.02 \\
        $8 \times 10^3$ & 84.84 & 13.3 & 4.5 & 623.78 \\
        \bottomrule
    \end{tabular}
\end{table}

\section{Additional Results} \label{sec:opope} 

\subsection{Performance Evaluation}\label{sec:more_res}

\noindent{\textbf{Results on OPOPE.}} To further validate our method, we evaluate on the OPOPE benchmark, which performs offline verification by checking whether generated captions mention pre-sampled positive or negative objects. Compared to POPE~\cite{li2023evaluating}, OPOPE is more challenging as models must first describe the full image before grounding specific objects in text. Following standard protocol, we report Accuracy, Precision, and F$_\beta$ ($\beta{=}0.2$)~\cite{chen2024halc} averaged across \textit{random}, \textit{popular}, and \textit{adversarial} settings. Results are summarized in Table~\ref{tab:opope_results}.

\begin{itemize}[leftmargin=*]
    \item \textbf{Obs: MPD consistently achieves the best or comparable performance across all models.}  
On LLaVA-1.5, MPD achieves the highest accuracy (73.50) and F1 score (93.40), outperforming Nullu by +0.98 and +0.61, respectively. On MiniGPT-4, we reach the best F1 score (92.60) and competitive accuracy (69.79). On mPLUG-Owl2, our method achieves the highest F1 score (93.50), surpassing Nullu and HALC. These results demonstrate that our method remains effective across models and metrics, improving object grounding accuracy while maintaining precision.

\end{itemize}



\noindent\textbf{Results on AMBER.}  
AMBER~\cite{wang2023amber} is a generative evaluation benchmark that provides fine-grained object-level annotations across 1,000 images. It evaluates hallucination from multiple perspectives, including object coverage (Cover), hallucination rate (HalRate), and alignment with cognition. Table~\ref{tab:AMBER} reports object-level evaluation results on three LVLMs using AMBER metrics.

\begin{itemize}[leftmargin=*]
    \item \textbf{Obs: MPD achieves the best trade-off between coverage and hallucination suppression.}  
    Compared with baselines, our method achieves the highest object coverage across all models (63.2 for MiniGPT-4, 53.4 for mPLUG-Owl2, and 51.2 for LLaVA-1.5), while also obtaining the lowest hallucination rates (45.7, 38.5, and 25.1, respectively). This suggests improved grounding performance without sacrificing content completeness.
\end{itemize}

\noindent\textbf{Results on MMHalBench.}  
MMHalBench~\cite{sun2023aligning} is a GPT-4-assisted question-answering benchmark that spans 12 object topics. Responses are rated by GPT-4 on a scale from 0 to 6, and those scoring below 3 are considered hallucinated. The metrics include average score and hallucination rate. Table~\ref{tab:MMHal} presents GPT-4 evaluation results following the MMHalBench protocol.

\begin{itemize}[leftmargin=*]
    \item \textbf{Obs: MPD improves response quality while reducing hallucinations.}  
    Our method achieves the highest GPT-4 scores across all models (1.69 for MiniGPT-4, 2.23 for mPLUG-Owl2, and 1.57 for LLaVA-1.5), while maintaining the lowest hallucination rates (0.79, 0.59, and 0.58, respectively). These results indicate that our editing strategy produces more faithful and informative responses without degrading linguistic quality.
\end{itemize}

\noindent{\textbf{Results on V* Bench, MMMU and	MathVersion}}
 To examine generalization beyond object-centric hallucinations, we further evaluate MPD on V* Bench~\cite{VstarBench}, which focuses on attribute and spatial consistency, as well as MMMU~\cite{MMMU} and MathVision~\cite{MathVision}, which require multimodal reasoning over charts and abstract visual concepts.
As shown in Table~\ref{tab:non-object}, \ours improves the Attribute, Spatial, and Overall scores on V* Bench over both the base model and Nullu, indicating stronger fine-grained visual consistency.
On MMMU and MathVision, although absolute performance remains modest due to task difficulty, MPD still yields consistent gains over the baselines, suggesting that our editing strategy does not harm (and can slightly benefit) broader multimodal reasoning ability.
\begin{table}[!t]
  \centering
  \tabstyle{0.5pt}
  \caption{Evaluation on non-object hallucination benchmarks.}
  \begin{tabular}{l|ccc|c|c}
    \toprule
    \multirow{2}{*}{Model} 
      & \multicolumn{3}{c|}{V* Bench} 
      & \multirow{2}{*}{MMMU} 
      & \multirow{2}{*}{MathVersion} \\ \cmidrule{2-4}
    
      & Attribute & Spatial & Overall &  &  \\
    \midrule
    LLaVA-1.5-7B     & 43.47 & 56.57 & 48.68 & 35.7 & 10.1 \\
    Nullu         & 45.08 & 57.26 & 49.15 & 35.9 & 10.3 \\
    \textbf{\ours}& 47.73 & 58.91 & 51.03 & 36.5 & 11.0 \\
    \bottomrule
  \end{tabular}\label{tab:non-object}
\end{table}


\noindent\textbf{Effect of Auxiliary LLM Choice.}
Our approach relies on an auxiliary LLM to construct faithful and hallucination contrastive pairs, motivating an examination of the LLM choice.
Following prior representation intervention work, we use GPT-3.5 in our main experiments to ensure comparability.
We then replace GPT-3.5 with GPT-5.1 and reconstruct all contrastive pairs under the same protocol.
Table~\ref{tab:gpt} shows that MPD yields consistent reductions in CHAIR$_S$ and CHAIR$_I$ and comparable BLEU scores across different auxiliary LLMs.
\begin{table}[!t]
  \centering
  \tabstyle{5pt}
  \caption{Effect of auxiliary LLM choice for contrastive pair construction.}
  \begin{tabular}{lccc}
    \toprule
    Model 
    & CHAIR$_S \downarrow$ 
    & CHAIR$_I \downarrow$ 
    & BLEU $\uparrow$ \\
    \midrule
    LLaVA-1.5-7B        & 20.40 & 7.08 & 15.72 \\
    \textbf{\ours} (GPT-3.5)   & 12.80 & 4.21 & 15.31  \\
    \textbf{\ours} (GPT-5.1)   & 12.60 & 4.21 & 15.02  \\
    \bottomrule
  \end{tabular}\label{tab:gpt}
\end{table}

\subsection{Impact of sample size.}\label{sec:abla_size}
We vary the sample size from 500 to 5000 and evaluate CHAIR$_S$ across three LVLMs. The results are shown in Figure~\ref{fig:editnumber}.

\begin{itemize}[leftmargin=*]
    \item \textbf{Obs: More editing samples improve performance, while small-scale edits remain effective.}  
    CHAIR$_S$ scores consistently decrease as the number of editing samples increases, indicating enhanced hallucination suppression. Notably, even with only 500 samples, our method yields clear gains over the original models, demonstrating strong data efficiency.
\end{itemize}

\subsection{Effect of selected weights.}\label{sec:abla_weights}
To understand how edited weight count affects hallucination suppression, we ablate top-$K$ selection, where $K$ is the number of weight rows aligned with hallucination semantics. Table~\ref{tab:selected_weights} reports results on LLaVA-1.5-7B across POPE, CHAIR, and MME benchmarks. 

\begin{itemize}[leftmargin=*]
    \item \textbf{Obs 10: Editing more weights improves performance up to a saturation point.}  
    As $K$ increases from $1 \times 10^3$ to $5 \times 10^3$, POPE accuracy steadily improves from 83.47 to 84.60, CHAIR$_{S}$/CHAIR$_{I}$ drop from 14.5/5.5 to 13.1/4.4, and MME score rises from 613.23 to 628.68. This consistently indicates that increasing the number of updated weights enhances hallucination suppression. However, further increasing $K$ beyond $5 \times 10^3$ yields only marginal gains, suggesting that the generation of hallucinations are highly concentrated in a limited subset of parameters, and overly broad edits may lead to diminishing returns.
\end{itemize}

\section{Effiency Comparison.}
\begin{table}[htbp]
    \begin{center}
    \tabstyle{4pt}
    \caption{Efficiency and hallucination comparison across methods in terms of latency, GPU memory, and CHAIR$_S$. Experiments are evaluated on a single RTX A40 GPU.}
    \label{tab:efficiency}
    \begin{tabular}{lcccc}
    \toprule
    Method  & Avg. Latency $\downarrow$ & GPU Memory $\downarrow$ & CHAIR$_S$ $\downarrow$  \\ 
    \midrule
    Greedy  &  3.1 s &  14758 MB   & 20.4\\
    VCD &  6.5 s &  16538 MB    & 20.3 \\
    OPERA & 23.5 s  &  23742 MB  & 17.5 \\
    HALC & 20.1 s  &  22135 MB & 16.9 \\
    Nullu & 3.9 s  &  15183 MB & 15.2 \\
    \rowcolor{gray!20}
    \textbf{\ours} &  3.7 s   & 15019 MB  & 12.8\\
    \bottomrule
    \end{tabular}
\end{center}
\end{table}
Table~\ref{tab:efficiency} presents a comparison of inference latency, peak GPU memory usage, and CHAIR$_S$ scores on the CHAIR benchmark using LLaVA-1.5 with a maximum token length of 128.

\begin{itemize}[leftmargin=*]
    \item \textbf{Obs: MPD achieves the best hallucination suppression with negligible runtime overhead.}  
    While our method involves a lightweight parameter editing step prior to inference, it introduces only minimal latency increase compared to the greedy baseline (3.7s vs. 3.1s) and requires no additional modules during decoding. Moreover, it outperforms all baselines in CHAIR$_S$ (48.8), including optimization-heavy approaches such as HALC and OPERA, demonstrating an effective trade-off between performance and efficiency.
\end{itemize}

\section{Case Studies}
 Figure~\ref{fig:llava_case} shows case studies from LLaVA-Bench under identical prompts. The original model often hallucinates objects inconsistent with the image, such as \textit{\textcolor{red}{Mount Fuji}}'' for Diamond Head, or \textit{\textcolor{red}{fox}}'' and \textit{\textcolor{red}{musical instruments}}'' in a scene with no such elements. It also introduces non-existent food items like \textit{\textcolor{red}{broccoli}}'' and ``\textit{\textcolor{red}{toast}}''. In contrast, our method generates responses that align with the visual content, avoiding co-occurrence-driven hallucinations. 
\begin{figure*}[!ht]
\includegraphics[width=0.98\linewidth]{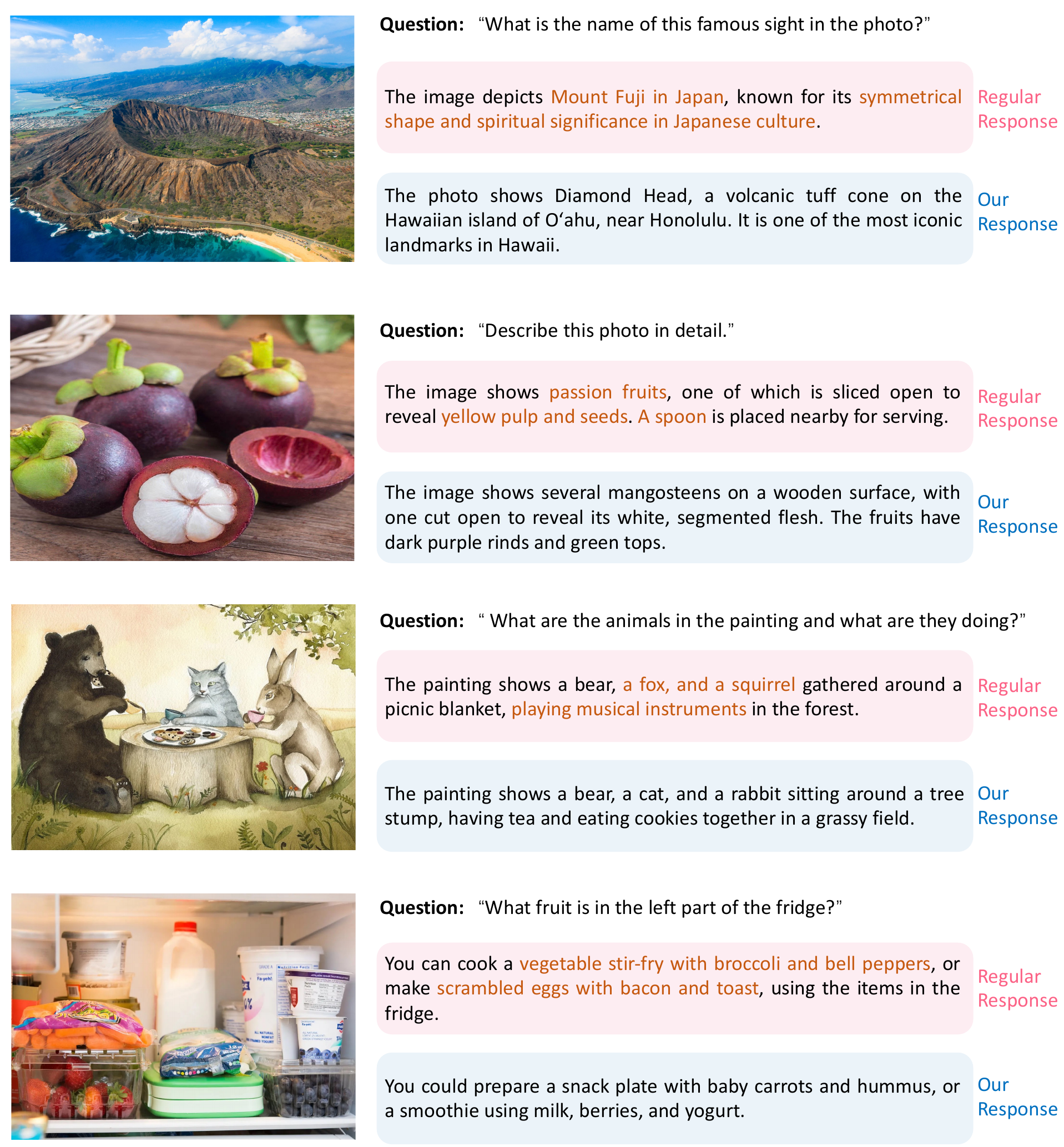} 
\caption{{Case studies on the LLaVA-Bench benchmark}. We compare the responses generated by regular decoding and our method using LLaVA-1.5.}
\label{fig:llava_case}
\end{figure*}

\end{document}